\definecolor{orange}{rgb}{0.0,1,0}
\definecolor{red}{rgb}{1,0,0}
\newtheorem{theorem}{\bf Theorem}[]
\newtheorem{lemma}[theorem]{Lemma}
\newtheorem{corollary}[theorem]{Corollary}
\newcommand{\transp}{\ensuremath{^\text{\textsc{t}}}}
\newcommand{\mat}[1]{{\ensuremath{\textsc{#1}}}}
\def\matA{\mat{A}}
\def\matB{\mat{B}}
\def\matH{\mat{H}}
\def\matI{\mat{I}}
\def\matQ{\mat{Q}}
\def\matS{\mat{S}}
\DeclareMathSymbol{\Prob}{\mathbin}{AMSb}{"50}
\newcommand\remove[1]{}
\def\math#1{$#1$}
\def\mand#1{$$#1$$}
\def\frac#1#2{{#1\over #2}}
\def\mld#1{\begin{equation}
#1
\end{equation}}
\def\eqan#1{\begin{eqnarray*}
#1
\end{eqnarray*}}
\DeclareMathSymbol{\R}{\mathbin}{AMSb}{"52}
\def\cl#1{{\cal #1}}
\def\aa{{\mathbf a}}
\def\bb{{\mathbf b}}
\def\vv{{\mathbf v}}
\def\zz{{\mathbf z}}
\def\aa{{\mathbf a}}
\def\bb{{\mathbf b}}
\def\norm#1{{\|#1\|}}
\def\dotfil{\leaders\hbox to 1.5mm{.}\hfill}
\newcounter{rmnum}
\def\RN#1{\setcounter{rmnum}{#1}\uppercase\expandafter{\romannumeral\value{rmnum}}}
\def\rn#1{\setcounter{rmnum}{#1}\expandafter{\romannumeral\value{rmnum}}}
\providecommand\remove[1]{}
\DeclareMathSymbol{\Prob}{\mathbin}{AMSb}{"50}
\DeclareMathSymbol{\Exp}{\mathbin}{AMSb}{"45}
\title{NP-Hardness and Inapproximability of Sparse PCA
}
\author{Malik Magdon-Ismail\\RPI CS Department, Troy, NY 12211.
{\sf magdon@cs.rpi.edu}
}
\begin{document}

\maketitle

\begin{abstract}%
\noindent
We give a reduction from {\sc clique} to establish that sparse PCA
is NP-hard. The reduction has a gap which we use to exclude
an FPTAS for sparse PCA (unless P=NP). Under weaker
complexity assumptions, we also exclude polynomial
 constant-factor approximation algorithms.
\end{abstract}

\def\spca{{\sc spca}}
\def\clique{{\sc clique}}
\def\dks{{\sc d\math{\kappa}s}}
\def\opt{{\sf OPT}}

\section{Introduction}

The earliest reference to principal components analysis (PCA) is in
\cite{P1901}. Since then, PCA has evolved into a classic tool
for data analysis.
A challenge for the interpretation of the principal 
components (or factors)
 is that they can be linear combinations of \emph{all} the original 
variables.
When the original variables have direct physical 
significance (e.g. genes in biological applications or assets in
financial applications) it is desirable to have factors 
which have loadings on only a small
number of the original variables. 
These interpretable factors are 
\emph{sparse principal components (\spca)}. 
There are many
heuristics for obtaining 
sparse factors~\cite{CJ95,TJU03,ZHT06,AEJL07,ABE08,MWA06a,SH08} 
as well as some approximation algorithms with provable guarantees
\cite{APD2014}.
Our goal in this short paper is to establish the NP-hardness and 
inapproximability of \spca{} using a  reduction from 
\clique{}.

The traditional formulation of sparse PCA is as cardinality
constrained variance maximization:
\begin{center}
\fbox{%
\begin{minipage}{0.72\textwidth}
\begin{description}\itemsep-3pt
\item[Problem:] \spca{} (sparse PCA)
\item[Input:] Symmetric matrix \math{\matS\in\R^{n\times n}}; 
sparsity  \math{r\ge 0}; variance \math{M\ge 0}.
\item[Question:] Does there exist  a unit vector \math{\vv\in\R^n}
with at most 
\math{r} non-zero elements (\math{\vv\transp\vv=1} and \math{\norm{\vv}_0\le r})
for which \math{\vv\transp\matS\vv\ge M}?
\end{description}
\end{minipage}
}
\end{center}
In the machine learning context, \math{\matS} is the covariance matrix for the
data and, when there is no sparsity constraint, the solution 
\math{\vv^*} is the top right singular vector of \math{\matS}.
A generalization of \spca{} is
the generalized eigenvalue problem:
maximize \math{\vv\transp\matS\vv} subject to
\math{\vv\transp\matQ\vv=1} and \math{\norm{\vv}_0\le r}.
This generalized eigenvalue problem is 
NP-hard \cite{MGWA08} (via a reduction from 
sparse regression which is known to be
NP-hard \cite{N95,FKT14}).
It is deeply embeded folklore that \spca{} is NP-hard.
The 
importance of sparse factors in dimensionality reduction 
has been recognized in some early work
(the \emph{varimax} criterion \cite{K58} has been used to 
rotate the factors to encourage sparsity, and this
has been used in multi-dimensional scaling approaches to dimensionality
reduction \cite{S69,K64}).

{\bf Notation.}
\math{\matA,\matB,\ldots} are
matrices; \math{\aa,\bb,\ldots} are 
vectors; and, \math{G,H,\ldots} are graphs. The top eigenvalue of a matrix
\math{\matA} is \math{\lambda_1(\matA)}; \math{\norm{\matA}_2}
is the spectral norm. For an undirected
 graph \math{G}, its adjacency matrix
\math{\matA} is a (0,1)-matrix with \math{\matA_{ij}=1} whenever
edge \math{(i,j)} is in \math{G}. The spectral radius of a graph is the
spectral norm of its adjacency matrix 
(also the top eigenvalue \math{\lambda_1}). \math{\bm0} (resp.~\math{\bm1})
are vectors or matrices of only zeros (resp. ones); for example, 
\math{\bm1_{2\times 2}} is a \math{2\times 2} matrix of ones.

\section{Reduction from \clique{}}
\begin{center}
\fbox{%
\begin{minipage}{0.72\textwidth}
\begin{description}\itemsep-3pt
\item[Problem:] \clique{}
\item[Input:] Undirected graph \math{G=(V,E)}; clique size \math{K}.
\item[Question:] Does there exist a \math{K}-clique in \math{G}?
\end{description}
\end{minipage}
}
\end{center}
The reduction is fairly straightforward. Given the inputs
\math{(G,K)} for \clique{}, we construct the inputs
\math{(\matS,r,M)} for \spca{} as follows.
Let \math{\matS} be the adjacency matrix of \math{G}; let \math{r=K};
and, let \math{M=K-1}. Clearly the reduction is polynomial.
It only remains to prove that there is a \math{K}-clique in
\math{G} \emph{if and only if} there is a
\math{K}-sparse unit vector \math{\vv} for which 
\math{\vv\transp\matS\vv\ge K-1}.
We need the following lemma on the spectral radius (top eigenvalue)
of an adjacency matrix.
\begin{lemma}[\cite{CS57}]\label{lem:lam1}
Let \math{\mat{A}} be the adjacency matrix of a graph \math{H}
of order \math{\ell}.
If \math{H} is an \math{\ell}-clique, then
\math{\norm{\mat{A}}_2=\lambda_1(\mat{A})=\ell-1}; if 
\math{H} is not an \math{\ell}-clique,
then \math{\norm{\mat{A}}_2=\lambda_1(\mat{A})<\ell-1}.
\end{lemma}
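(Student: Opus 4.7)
The plan is to handle the two cases separately and to first reduce $\|\matA\|_2 = \lambda_1(\matA)$ to a spectral statement. Since $\matA$ is a real symmetric matrix, its spectral norm equals $\max_i |\lambda_i(\matA)|$. Because $\matA$ is entrywise nonnegative, the Perron--Frobenius theorem guarantees that its largest eigenvalue $\lambda_1(\matA)$ is nonnegative and dominates $|\lambda_i(\matA)|$ for every $i$. Hence $\|\matA\|_2 = \lambda_1(\matA)$, and it suffices to reason about $\lambda_1$.

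For the clique case, I would simply write $\matA = \bm1_{\ell\times\ell} - \matI_\ell$ and use the fact that $\bm1_{\ell\times\ell}$ has eigenvalues $\ell$ (eigenvector $\bm1$) and $0$ (multiplicity $\ell-1$). Subtracting $\matI_\ell$ shifts each eigenvalue by $-1$, so the spectrum of $\matA$ is $\{\ell-1, -1, \ldots, -1\}$, giving $\lambda_1(\matA) = \ell-1$ exactly.

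For the non-clique case, I would split on connectedness. If $H$ is disconnected, then every connected component has order $k \le \ell-1$, so its adjacency matrix has maximum degree at most $k-1 \le \ell-2$; by the standard Rayleigh/row-sum bound $\lambda_1 \le \Delta$, each component contributes at most $\ell-2 < \ell-1$, and $\lambda_1(\matA)$ is the maximum over components. If $H$ is connected and not a clique, at least one vertex has degree strictly less than $\ell-1$, so $H$ is not $(\ell-1)$-regular. I would then invoke the strict form of the Perron bound for connected graphs: $\lambda_1(\matA) \le \Delta$ with equality if and only if the graph is $\Delta$-regular. A short self-contained proof takes the Perron eigenvector $\vv > 0$ with maximum entry $v_m$, writes $\lambda_1 v_m = \sum_{j\sim m} v_j \le \deg(m)\, v_m \le \Delta v_m$, and observes that equality throughout forces every neighbor $j$ of $m$ to satisfy $v_j = v_m$ and $\deg(m) = \Delta$; connectedness then propagates both equalities to the whole graph, contradicting non-regularity. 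Therefore $\lambda_1(\matA) < \Delta \le \ell-1$.

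The only subtle step is the strict inequality $\lambda_1 < \Delta$ for connected, non-regular graphs; the rest is bookkeeping. A small edge case worth checking is $\ell = 1$, where both the clique statement and the spectral identity trivially hold since $\matA = [0]$ and $\lambda_1 = 0 = \ell-1$.
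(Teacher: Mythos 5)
Your proposal is correct in substance, but note first that the paper offers no proof of this lemma at all: it is cited to \cite{CS57}, and the closest the paper comes to proving the non-clique half is the strengthened bound (\ref{eq:proof1}) inside Theorem~\ref{theorem:noFPTAS}, where $\lambda_1$ of the complete graph minus one edge is computed explicitly (the symmetric eigenvector ansatz reducing to $\lambda^2-(\ell-3)\lambda-2(\ell-2)=0$) and the general non-clique case follows from strict monotonicity of the spectral radius under edge deletion. Your route --- Perron--Frobenius to identify $\norm{\matA}_2$ with $\lambda_1(\matA)$, the shifted all-ones matrix for the clique case, and the degree bound $\lambda_1\le\Delta$ with its strict form for connected non-regular graphs --- is genuinely different and more elementary. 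What it buys is a self-contained qualitative proof of the lemma; what it gives up is any quantitative gap, and it is precisely the quantitative gap $(\ell-1)(1-\epsilon^*(\ell))$ from the paper's edge-deletion computation that powers the inapproximability argument later on.

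One step fails as written. In the connected case you establish only that $H$ is not $(\ell-1)$-regular, yet you conclude $\lambda_1(\matA)<\Delta$, which requires $H$ not to be $\Delta$-regular. A connected non-clique can perfectly well be $\Delta$-regular with $\Delta\le\ell-2$: for the $5$-cycle, $\lambda_1=\Delta=2$, so the first link of your chain $\lambda_1<\Delta\le\ell-1$ is false there. The lemma's conclusion survives, and the fix is one line of case analysis: if $\Delta\le\ell-2$, then $\lambda_1\le\Delta\le\ell-2<\ell-1$ with no regularity discussion needed; if $\Delta=\ell-1$, then the vertex of degree strictly less than $\ell-1$ witnesses that $H$ is not $\Delta$-regular, and your strict Perron argument (which is correct as sketched, including the propagation of equality along the connected graph) gives $\lambda_1<\Delta=\ell-1$. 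With that case split, together with your correct treatment of the clique case, the disconnected case, and the $\ell=1$ edge case, the proof is complete.
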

We now prove the claim. Suppose \math{Q} is a \math{K}-clique in \math{G}
and let \math{\matS_Q} be the \math{K\times K} principal submatrix of
\math{\matS} corresponding to the nodes in \math{Q}. 
Let \math{\zz} be a unit-norm top eigenvector of \math{\matS_Q}, and let
\math{\vv(\zz)} be the vector with \math{K} non-zeros induced by 
\math{\zz}: the non-zeros in \math{\vv} are at the indices corresponding
to the nodes in \math{Q} and the values are the corresponding values
in \math{\zz}. Then,
\mand{
\vv\transp\matS\vv=\zz\transp\matS_Q\zz=\lambda_1(\matS_Q)=K-1,
}
where the last equality follows from Lemma~\ref{lem:lam1} because 
\math{\matS_Q} is the adjacency matrix of a \math{K}-clique. So, 
\math{\vv(\zz)} is a \math{K}-sparse unit vector for which 
\math{\vv\transp\matS\vv\ge K-1}.
Now, suppose that there is a unit-norm \math{K}-sparse \math{\vv}
for which \math{\vv\transp\matS\vv\ge K-1}. Let \math{\matS_Q} be the 
\math{K\times K}
principal submatrix of \math{\matS} corresponding to the non-zero entries
of \math{\vv} and let \math{\zz(\vv)} be the \math{K}-dimensional vector 
consisting only of the non-zeros of \math{\vv}. Let \math{Q} be the 
subgraph induced by the nodes corresponding to the non-zero indices of 
\math{\vv} (\math{\matS_Q} is the adjacency matrix of \math{Q}).
Then, \math{\vv\transp\matS\vv=\zz\transp\matS_Q\zz\ge K-1}, and so
\math{\lambda_1(\matS_Q)\ge K-1}.
By Lemma~\ref{lem:lam1} if \math{Q} is not a \math{K}-clique then
\math{\lambda_1(\matS_Q)< K-1}, so it follows that \math{Q} is a 
\math{K}-clique.
Clearly \spca{} is in NP and so it is NP-complete.

\section{Inapproximability of \spca}

We now provide evidence that there is 
no efficient approximation algorithm
for \spca.
First we rule out the possibility of a fully
 polynomial time approximation
scheme (FPTAS).
Given any instance \math{(\matS,r)} of \spca,
define 
\math{\opt(\matS,r)=\max_{\vv}\vv\transp\matS\vv} 
over unit-norm \math{r}-sparse \math{\vv}.
A \math{(1-\epsilon)}-approximation algorithm for \spca{}
produces a unit-norm \math{r}-sparse solution \math{\tilde\vv} for any given
instance \math{(\matS,r)}
satisfying \math{\tilde\vv\transp\matS\tilde\vv\ge(1-\epsilon)\opt(\matS,r)}.
An FPTAS is algorithm to compute a
\math{(1-\epsilon)}-approximation for 
\math{\epsilon>0} and every instance of \spca{} that is polynomial in
\math{n,r,\epsilon^{-1}}.
The next theorem establishes that there is no polynomial
\math{(1-O(1/r^2))}-approximation algorithm and hence no FPTAS.
\begin{theorem}[No FPTAS]\label{theorem:noFPTAS}
Unless P=NP, 
there is no polynomial time
\math{(1-\epsilon)}-approximation algorithm for \spca{} with
\mand{\epsilon<\epsilon^*(r)
=\frac{r+1}{2(r-1)}\left(1-\sqrt{1-\frac{8}{(r+1)^2}}\ \right)
=\frac{2}{r^2-1}+O(1/r^4).}
\end{theorem}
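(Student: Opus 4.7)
The plan is to turn the reduction from \clique{} already given in Section 2 into a gap-preserving reduction, so that an approximation algorithm strictly better than $1-\epsilon^*(r)$ would distinguish YES from NO instances of \clique{} in polynomial time. The same construction $\matS=$ adjacency of $G$, $r=K$ is used; what is new is a quantitative strengthening of Lemma \ref{lem:lam1}. Specifically, I would prove that for any graph $H$ on $r$ vertices which is not the complete graph $K_r$, the spectral radius satisfies
\mand{\lambda_1(H)\;\le\;\lambda_1(K_r-e)\;=\;\frac{(r-3)+\sqrt{r^2+2r-7}}{2}.}
The upper bound follows from the standard monotonicity of the Perron eigenvalue under edge deletion (any non-clique $H$ on $r$ vertices is contained, up to relabeling, in $K_r$ minus some edge $e$). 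The exact value of $\lambda_1(K_r-e)$ I would compute by exploiting the obvious automorphism: the top eigenvector has the form $(a,a,b,\ldots,b)$ (the two endpoints of the deleted edge share one value and the remaining $r-2$ vertices share another), which yields the $2\times 2$ quadratic $\lambda^2-(r-3)\lambda-2(r-2)=0$ whose larger root is the displayed quantity.

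Next I would argue that this bound is exactly what controls the optimum of \spca{} in the NO case. If $G$ has no $K$-clique, then for any unit $r$-sparse $\vv$ the induced subgraph on the support (of size $s\le r$) is either a non-clique on $r$ vertices (giving $\lambda_1\le\lambda_1(K_r-e)$) or a graph on $s<r$ vertices (giving $\lambda_1\le s-1\le r-2$). A one-line check using $\sqrt{r^2+2r-7}>r-1$ for $r\ge 3$ shows $r-2<\lambda_1(K_r-e)$, so in the NO case $\opt(\matS,r)\le\lambda_1(K_r-e)$. Combined with the reverse argument from Section 2, which gives $\opt(\matS,r)=r-1$ in the YES case, this yields a gap of ratio $\lambda_1(K_r-e)/(r-1)$.

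With the gap in hand, the inapproximability bound is a matter of algebra. Suppose a polynomial-time $(1-\epsilon)$-approximation algorithm outputs $\tilde\vv$. Running it on the constructed $(\matS,r)$ and testing whether $\tilde\vv\transp\matS\tilde\vv>\lambda_1(K_r-e)$ decides \clique{} whenever $(1-\epsilon)(r-1)>\lambda_1(K_r-e)$, i.e.\ whenever
\mand{\epsilon\;<\;1-\frac{\lambda_1(K_r-e)}{r-1}\;=\;\frac{(r+1)-\sqrt{r^2+2r-7}}{2(r-1)}.}
Rewriting $r^2+2r-7=(r+1)^2-8$ puts this in the form $\epsilon^*(r)=\frac{r+1}{2(r-1)}\bigl(1-\sqrt{1-8/(r+1)^2}\bigr)$ stated in the theorem. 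The asymptotic expansion $\epsilon^*(r)=2/(r^2-1)+O(1/r^4)$ then falls out of the Taylor expansion $\sqrt{1-x}=1-x/2-x^2/8-\cdots$ applied with $x=8/(r+1)^2$, and since $\epsilon^*(r)=\Theta(1/r^2)$ is sub-constant, no FPTAS can exist (for an FPTAS could be run with $\epsilon=1/(2r^2)$ in polynomial time).

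The main obstacle I expect is proving the tight spectral bound for $K_r-e$ and justifying that it really is the extremal non-clique on $r$ vertices. The edge-monotonicity argument plus the explicit eigenvalue computation handles both, but one must be careful to cover all support sizes $s\le r$ rather than only $s=r$, which is where the easy inequality $\lambda_1(K_r-e)>r-2$ is used. Everything else is book-keeping around the reduction and the algebraic manipulation to bring the gap into the closed form of the theorem.
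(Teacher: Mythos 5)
Your proposal is correct and follows essentially the same route as the paper: the identical reduction, the same extremal computation of $\lambda_1(K_r-e)$ via the symmetric eigenvector $(a,a,b,\ldots,b)$ and the quadratic $\lambda^2-(r-3)\lambda-2(r-2)=0$, edge-monotonicity of the spectral radius to cover all non-cliques, and the same algebraic identification of the gap with $\epsilon^*(r)$ using $(r+1)^2-8=r^2+2r-7$. Your only additions --- explicitly handling support sizes $s<r$ via $s-1\le r-2<\lambda_1(K_r-e)$ and spelling out why a sub-constant $\epsilon^*(r)=\Theta(1/r^2)$ rules out an FPTAS --- are small refinements of details the paper leaves implicit.
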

\begin{proof}
The proof essentially amounts to 
strengthening Lemma~\ref{lem:lam1} for the case that \math{H} is not
an \math{\ell}-clique. Specifically in  Lemma~\ref{lem:lam1}, if 
adjacency matrix \math{\matA\in\R^{\ell\times \ell}} is not the adjacency 
matrix of 
an \math{\ell}-clique, then we will show that 
\mld{
\lambda_1(\matA)\le
\frac{\ell-3}{2}+\frac{\ell+1}{2}\left(1-\frac{8}{(\ell+1)^2}\right)^{1/2}
=(\ell-1)(1-\epsilon^*(\ell)).
\label{eq:proof1}\tag{\math{*}}
}
Suppose that (\ref{eq:proof1}) holds whenever \math{\matH} is not an 
\math{\ell}-clique. For any \spca{} instance \math{(\matS,r)},
suppose the  polynomial algorithm \math{\cl{A}}
gives
a \math{(1-\epsilon)}-approximation with 
\math{\epsilon<\epsilon^*(r)}.
We show how to use \math{\cl{A}} to polynomialy decide
\clique. Given \math{(G,K)}, the inputs to \clique,
use our reduction to
construct \math{(\matS,K,K-1)}, the inputs to \spca. Now run 
algorithm \math{\cl{A}} on \math{(\matS,K)} to obtain \math{\tilde \vv} and
compute \math{x=\tilde \vv\matS\tilde\vv}.
If \math{x\ge (K-1)(1-\epsilon^*(K))} then \math{\opt(\matS,K)=K-1} and so
there is a \math{K}-clique in \math{G}; if 
\math{x< (K-1)(1-\epsilon^*(K))} then \math{\opt(\matS,K)<K-1}
(since we have a better than \math{(1-\epsilon^*(K))}-approximation)
and so there is no \math{K}-clique in \math{G}.

To prove (\ref{eq:proof1}), we first consider the adjacency matrix of
a complete graph minus one edge,
\mand{
\matA=
\left[
\begin{matrix}
\bm0_{2\times 2}&\bm1_{2\times(\ell-2)}\\
\bm1_{(\ell-2)\times 2}&\bm1_{\ell-2}\bm1_{\ell-2}\transp
-\matI_{(\ell-2)\times(\ell-2)}
\end{matrix}
\right]
}
By symmetry, the top eigenvector can be written
\math{\left[\begin{smallmatrix}x\bm1_{2}\\ y\bm1_{\ell-2}
\end{smallmatrix}\right]}.
The eigenvalue equation is
\mand{
\left[
\begin{matrix}
\bm0_{2\times 2}&\bm1_{2\times(\ell-2)}\\
\bm1_{(\ell-2)\times 2}&\bm1_{\ell-2}\bm1_{\ell-2}\transp
-\matI_{(\ell-2)\times(\ell-2)}
\end{matrix}
\right]
\left[\begin{matrix}x\bm1_{2}\\ y\bm1_{\ell-2}
\end{matrix}\right]
=
\lambda\left[\begin{matrix}x\bm1_{2}\\ y\bm1_{\ell-2}
\end{matrix}\right],
}
and we obtain the equations:
\eqan{
(\ell-2)y&=&\lambda x;\\
2x+(\ell-3)y&=&\lambda y.
}
Solving for \math{\lambda} gives the quadratic
\math{
\lambda^2-(\ell-3)\lambda-2(\ell-2)=0,}
and the positive root is
\mand{
\lambda=\frac{\ell-3}{2}+\frac{1}{2}\sqrt{(\ell+1)^2-8},
}
which is the expression in (\ref{eq:proof1}). Since the spectral radius is
strictly decreasing with edge-removal, we have proved the 
upper bound in (\ref{eq:proof1}).
\end{proof}

Under stronger (average-case) complexity assumptions we can also exclude
polynomial 
constant factor approximations for \spca. 
A natural optimization version of 
\clique{} is the densest-\math{K}-subgraph (\dks):
Given \math{(G,K)} find a subgraph \math{Q} on \math{K} nodes
with the maximum number of edges. There is evidence that \dks{} does not
admit efficient approximation algorithms~\cite{AAMM}.

Let \math{G} and \math{G'} be two graphs
on \math{n} vertices. Suppose that one of the graphs has
an \math{\ell}-clique 
and for the other graph, every subgraph on \math{\ell} vertices has at most
\math{\delta \ell(\ell-1)/2} edges for \math{0<\delta<1}. 
If one has a polynomial \math{\delta}-approximation 
algorithm for \dks{} then one can determine which of 
\math{G,G'} has the \math{\ell}-clique in polynomial time.
We show that if one has an \math{\alpha}-approximation algorithm for
\spca, then one can determine which of 
\math{G,G'} has the \math{\ell}-clique in polynomial time for 
\math{\delta\le\alpha^2}. This means that if there are no polynomial algorithms
to distinguish between graphs with \math{\ell}-cliques and
graphs whose \math{\ell} subsets are all below a density \math{\alpha^2}, 
then
there are no polynomial \math{\alpha}-approximation algorithms for 
\spca.

Suppose there is an \math{\alpha}-approximation 
algorithm for 
\spca. So, given any instance \math{(\matS,r)} of
\spca, in polynomial time one can 
construct a solution \math{\tilde\vv} for which 
\math{\tilde\vv\transp\matS\tilde\vv\ge\alpha\opt(\matS,r)}.
Let \math{G,G'} be the two graphs described above 
with \math{\delta=\alpha^2}. Note that 
\mand{\delta=\alpha^2<\alpha^2\frac{(\ell-1)}\ell+\frac1\ell,}
where the inequality is because \math{0<\alpha<1}.
Now, let \math{\matA} be the adjacency matrix of \math{G} and 
run the \math{\alpha}-approximation algorithm for \spca{} with inputs
\math{(\matA,\ell)} to produce a solution \math{\tilde\vv}. 
If \math{\tilde\vv\transp\matA\tilde\vv\ge\alpha(\ell-1)}, declare that
\math{G} contains the \math{\ell}-clique; otherwise declare that 
\math{G'} contains the \math{\ell}-clique. We prove that our algorithm
correctly identifies the graph with the  \math{\ell}-clique.

If \math{G} does contain the  \math{\ell}-clique, then 
\math{\opt(\matA,\ell)=\ell-1} and the output \math{\tilde\vv} will satisfy
\math{\tilde\vv\transp\matA\tilde\vv\ge\alpha(\ell-1)}
(because it is an \math{\alpha}-approximation) and so we will
correctly identify \math{G} to have  \math{\ell}-clique.
Now suppose that \math{G} does not contain the  \math{\ell}-clique.
So, every \math{\ell}-node subgraph in \math{G} has at most 
\math{e\le\delta\ell(\ell-1)/2} edges. 
We now use the bound on the spectral radius of a graph with 
\math{e} edges from~\cite{H88}: 
\math{\norm{\matA}_2\le\sqrt{2e-n+1}}, and since 
\math{e\le\delta\ell(\ell-1)/2}, we have that
\eqan{
\norm{\matA}_2
&\le& 
\sqrt{\delta\ell(\ell-1)-\ell+1}\\
&=&
\sqrt{\alpha^2\ell(\ell-1)-\ell+1}\\
&<&
\sqrt{\left(\alpha^2\frac{(\ell-1)}{\ell}+\frac1\ell\right)\ell(\ell-1)-\ell+1}
\\
&=&\alpha(\ell-1).
}
Since \math{\norm{\matA}_2<\alpha(\ell-1)}, we will correctly identify
\math{G'} to have the \math{\ell}-clique.
The conclusion is summarised in the
following theorem.
\begin{theorem}\label{theorem:dks2spca}
A polynomial \math{\alpha}-approximation algorithm for
\spca{} gives a polynomial algorithm to distinguish between
two graphs on \math{n} vertices, one of which contains an
\math{\ell}-clique and the other with every subset of 
\math{\ell} nodes having at most \math{\alpha^2\ell(\ell-1)/2} edges
(for any \math{(n,\ell)}).
\end{theorem}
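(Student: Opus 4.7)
The plan is to turn any $\alpha$-approximation algorithm $\cl{A}$ for \spca{} into a polynomial-time distinguisher for the promise: run $\cl{A}$ on the \spca{} instance $(\matA,\ell)$, where $\matA$ is the adjacency matrix of $G$, and threshold the returned value at $\alpha(\ell-1)$ — declare $G$ to contain the $\ell$-clique if $\tilde\vv^\transp\matA\tilde\vv\ge\alpha(\ell-1)$, and declare $G'$ otherwise. The correctness argument then splits into two one-sided bounds on $\opt(\matA,\ell)$.

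The easy direction handles the case that $G$ genuinely contains the $\ell$-clique. The \clique-to-\spca{} reduction proved earlier, combined with Lemma~\ref{lem:lam1}, gives $\opt(\matA,\ell)=\ell-1$ exactly, so the approximation guarantee forces $\tilde\vv^\transp\matA\tilde\vv\ge\alpha(\ell-1)$ and the threshold test answers correctly.

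For the other direction, I would prove the stronger statement that when $G'$ contains the clique, \emph{every} $\ell$-sparse unit vector $\vv$ satisfies $\vv^\transp\matA\vv<\alpha(\ell-1)$. Writing $\vv^\transp\matA\vv=\zz^\transp\matA_Q\zz\le\lambda_1(\matA_Q)$ for the principal $\ell\times\ell$ submatrix $\matA_Q$ on the support of $\vv$, it suffices to bound $\lambda_1(\matA_Q)$ uniformly over all $\ell$-node induced subgraphs $Q$ of $G$. The promise gives $e\le\alpha^2\ell(\ell-1)/2$ edges in any such $Q$, and feeding this into Hong's spectral-radius inequality~\cite{H88}, $\lambda_1(\matA_Q)\le\sqrt{2e-\ell+1}$, yields $\lambda_1(\matA_Q)\le\sqrt{\alpha^2\ell(\ell-1)-\ell+1}$.

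The only calculation that has to go through cleanly — and is the main (though modest) obstacle — is the strict inequality $\sqrt{\alpha^2\ell(\ell-1)-\ell+1}<\alpha(\ell-1)$. After squaring and simplifying, this collapses to $\alpha^2(\ell-1)<\ell-1$, i.e., $\alpha^2<1$, which we may assume since otherwise the theorem is vacuous. This closes the second branch, certifies the threshold test, and establishes the theorem.
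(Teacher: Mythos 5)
Your proposal is correct and follows essentially the same route as the paper: the same thresholding algorithm at $\alpha(\ell-1)$, the same use of $\opt(\matA,\ell)=\ell-1$ via Lemma~\ref{lem:lam1} in the clique case, and the same application of Hong's bound $\lambda_1\le\sqrt{2e-\ell+1}$ in the sparse case, with your direct squaring of $\sqrt{\alpha^2\ell(\ell-1)-\ell+1}<\alpha(\ell-1)$ being algebraically identical to the paper's intermediate inequality $\alpha^2<\alpha^2\frac{\ell-1}{\ell}+\frac{1}{\ell}$. If anything, you are slightly more careful than the paper in spelling out the reduction $\vv\transp\matA\vv=\zz\transp\matA_Q\zz\le\lambda_1(\matA_Q)$ over the support of $\vv$, which the paper leaves implicit.
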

Under a variety of complexity assumptions it is known that one 
cannot efficiently distinguish between graphs with \math{\ell}-cliques 
and graphs in which all subsets of size \math{\ell} are sparse (for
varying degrees of sparseness).
\begin{theorem}[No constant factor approximation for \dks~\cite{AAMM}]
\label{AAMM}
Let \math{1>\delta>0} be any
constant approximation factor.
Let \math{G} and \math{G'} be two graphs
on \math{\ell^2} vertices. One of the graphs has
an \math{\ell}-clique 
and for the other graph, every subgraph on \math{\ell} vertices has at most
\math{\delta \ell(\ell-1)/2} edges. 
Suppose there is no polynomial time algorithm for solving the hidden 
clique problem for a planted clique of size \math{n^{1/3}}.
Then, there is no polynomial algorithm to determine which of
\math{G,G'} has the \math{\ell}-clique.
\end{theorem}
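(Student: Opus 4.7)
The plan is to reduce from the hidden clique problem to the distinguishing task of Theorem~\ref{AAMM}. Let $H$ be an instance on $N$ vertices, drawn either from $\cl{G}(N,1/2)$ or from $\cl{G}(N,1/2)$ with a planted $K$-clique, $K=N^{1/3}$. I would construct in polynomial time a graph $\tilde H$ on $\ell^2$ vertices such that (i) if $H$ contains the planted $K$-clique then $\tilde H$ contains an $\ell$-clique, and (ii) if $H$ is pure $\cl{G}(N,1/2)$ then with high probability every $\ell$-subset of $\tilde H$ induces at most $\delta\ell(\ell-1)/2$ edges. The two realizations play the roles of $G$ and $G'$; applying the hypothetical polynomial distinguishing algorithm then decides the hidden clique instance, yielding the desired contradiction.

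For the construction I would try an amplification whose vertices correspond to small structured sub-objects of $H$ (for instance $t$-subsets of $V(H)$, or short walks of length $t$) with adjacency given by a compatibility predicate designed so that cliques in $H$ lift to much larger cliques in $\tilde H$. The parameters $t$ and $N$ are chosen so that the vertex count matches $\ell^2$ and the amplification carries a $K$-clique in $H$ to an $\ell$-clique in $\tilde H$. Completeness is then immediate from the combinatorics of the lift. For soundness I would (a) compute the expected edge density induced on any fixed $\ell$-subset of $\tilde H$ under $\cl{G}(N,1/2)$ using $\Probab{\text{$2t$-clique in }H}=2^{-\binom{2t}{2}}$ (or the analogous bound for whichever predicate is used), (b) apply Chernoff, and (c) union-bound over all $\binom{\ell^2}{\ell}$ subsets to obtain uniform $\delta$-sparseness.

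The main obstacle, and the crux of the proof, is the parameter balance. The naive ``$t$-subsets with union-clique adjacency'' construction falls short: with $K=N^{1/3}$, the ratio $\binom{K}{t}/\binom{N}{t}^{1/2}$ behaves like $N^{-t/6+o(1)}$, too small to simultaneously force an $\ell$-clique into an $\ell^2$-vertex graph. The proof therefore needs a more clever choice of vertex set and compatibility predicate, carefully tuned so that the amplification is strong \emph{and} the random-case density is uniformly (not merely in expectation) small over every $\ell$-subset. Threading these competing requirements through the hidden-clique exponent $1/3$, and verifying that the Chernoff deviation beats the super-exponential union-bound factor $\binom{\ell^2}{\ell}$, is where the real work lies; the remaining probabilistic bookkeeping is routine.
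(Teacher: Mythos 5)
This theorem is not proved in the paper at all: it is imported verbatim from \cite{AAMM} as a black-box hardness result, so there is no internal proof to match your attempt against. Judged on its own terms, your proposal correctly identifies the high-level architecture of the cited argument --- reduce from the planted-clique problem, amplify so that a hidden $K$-clique with $K=N^{1/3}$ lifts to an $\ell$-clique, and show that a pure $\cl{G}(N,1/2)$ instance lifts to a graph that is uniformly $\delta$-sparse on $\ell$-subsets --- but it is a plan rather than a proof, and you concede as much: your own calculation shows the natural ``$t$-subsets with union-clique adjacency'' lift loses a factor of order $N^{t/6}$ and cannot reach an $\ell$-clique inside an $\ell^2$-vertex graph. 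The construction that closes this gap is precisely the nontrivial content of \cite{AAMM}, and leaving it as ``a more clever choice of vertex set and compatibility predicate'' means the central idea of the theorem is missing from the proposal.

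There is also a concrete technical error in the soundness step as you have written it. In any lift whose vertices are structured sub-objects of $H$ (tuples, subsets, walks) with a compatibility predicate depending on edges of $H$, the edge indicators of $\tilde H$ are \emph{not} independent: two lifted edges can share many underlying edges of $H$, so the number of edges induced on a fixed $\ell$-subset of $\tilde H$ is a low-degree polynomial in correlated Bernoulli variables, and the plain Chernoff bound in your step (b) does not apply. To beat the union-bound factor $\binom{\ell^2}{\ell}\le (e\ell)^{\ell}$ you need concentration of order $e^{-\omega(\ell\log\ell)}$ for these dependent counts, which requires Janson/Kim--Vu-type inequalities for polynomials of independent bits or a structural counting argument --- and tuning that concentration against the amplification strength is exactly where the cited proof does its real work. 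So the verdict is a genuine gap: right skeleton, but the amplification construction and a valid concentration argument, i.e., both pillars of the theorem, are absent.
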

Using Theorem~\ref{theorem:dks2spca} with Theorem~\ref{AAMM},
\begin{corollary}[No constant factor approximation for \spca]\label{theorem:noConstant}
Suppose there is no polynomial time algorithm for solving the hidden 
clique problem for a planted clique of size \math{n^{1/3}}.
Then, for any constant \math{0<\alpha<1}, there is no
polynomial time \math{\alpha}-approximation algorithm for 
\spca.
\end{corollary}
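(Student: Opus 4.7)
The plan is to derive the corollary by a straightforward chain of contrapositives: combine Theorem~\ref{theorem:dks2spca} (a polynomial \math{\alpha}-approximation for \spca{} solves the \math{\ell}-clique vs.\ \math{\alpha^2}-sparse distinguishing problem) with Theorem~\ref{AAMM} (this distinguishing problem is hard under the planted-clique assumption). So I would assume for contradiction that there is a polynomial time \math{\alpha}-approximation algorithm \math{\cl{A}} for \spca{} for some fixed constant \math{0<\alpha<1}, and derive a polynomial algorithm that violates Theorem~\ref{AAMM}.

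Concretely, I would set \math{\delta=\alpha^2}. Since \math{0<\alpha<1}, this is a constant in \math{(0,1)}, so Theorem~\ref{AAMM} is applicable with this choice of \math{\delta}: under the stated planted-clique assumption, no polynomial algorithm distinguishes a graph on \math{\ell^2} vertices containing an \math{\ell}-clique from one in which every \math{\ell}-subset spans at most \math{\alpha^2\ell(\ell-1)/2} edges. On the other hand, Theorem~\ref{theorem:dks2spca} converts the assumed \math{\alpha}-approximation \math{\cl{A}} into exactly such a distinguisher (feed the adjacency matrix into \math{\cl{A}} with sparsity parameter \math{r=\ell}, then threshold \math{\tilde\vv\transp\matA\tilde\vv} against \math{\alpha(\ell-1)}), and the distinguisher runs in polynomial time because \math{\cl{A}} does.

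These two statements are contradictory, ruling out \math{\cl{A}}. Since \math{\alpha\in(0,1)} was arbitrary, no polynomial time constant-factor approximation algorithm for \spca{} exists under the planted-clique assumption.

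The main obstacle is really just bookkeeping rather than mathematics: one must match up the parameters so that the \math{\delta} appearing in Theorem~\ref{AAMM} is the square of the approximation ratio \math{\alpha} used by the hypothetical \spca{} algorithm, and check that the classes of graphs in the two theorems are literally the same pair (both work with \math{n=\ell^2} vertex graphs under identical clique-vs-sparse dichotomy). With those correspondences in place, the corollary is immediate.
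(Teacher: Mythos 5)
Your proposal is correct and matches the paper's argument exactly: the paper derives the corollary precisely by instantiating Theorem~\ref{theorem:dks2spca} with \math{\delta=\alpha^2} and combining it with Theorem~\ref{AAMM} via contrapositive, noting that Theorem~\ref{theorem:dks2spca} holds for any \math{(n,\ell)} and so in particular for \math{n=\ell^2}. Your parameter bookkeeping (feeding the adjacency matrix to \math{\cl{A}} with sparsity \math{r=\ell} and thresholding at \math{\alpha(\ell-1)}) is exactly the distinguisher constructed in the proof of Theorem~\ref{theorem:dks2spca}, so nothing further is needed.
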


{\small
\bibliographystyle{abbrv}
\bibliography{mypapers,spca,masterbib} 
}

\end{document}